\documentclass[letterpaper, 10 pt, conference]{ieeeconf}  

\IEEEoverridecommandlockouts                              

\usepackage{dsfont}
\usepackage{amsmath}
\usepackage{booktabs} 
\usepackage{multirow} 
\usepackage{algorithm}
\usepackage{algorithm}
\usepackage{algpseudocode}
\usepackage{amssymb}
\algtext*{EndFor}
\algtext*{EndIf}
\usepackage[table]{xcolor}
\usepackage{booktabs} 
\usepackage{array}
\usepackage{booktabs}
\usepackage{placeins}
\usepackage{graphicx}
\usepackage{caption}
\newtheorem{theorem}{Theorem}

\newtheorem{definition}{Definition}

\title{\LARGE \bf
MeshSIPP: Efficient Lattice Planning in Dynamic Environment
}

\author{Marat Agranovskiy and Konstantin Yakovlev}

\makeatletter
\newcommand{\preprinttext}[1]{\gdef\@preprinttext{#1}}

\let\oldmaketitle\maketitle
\renewcommand{\maketitle}{%
  \oldmaketitle
  \ifx\@preprinttext\undefined\else
    \bgroup
    \renewcommand\thefootnote{}
    \footnotetext{%
      \vspace{-12pt}
      \hrule width 0.4\columnwidth height 0.4pt 
      \vspace{5pt}
      \noindent \@preprinttext
    }%
    \egroup
  \fi%
}
\makeatother

\preprinttext{This is a preprint, currently under review.}

\begin{document}

\maketitle
\thispagestyle{empty}
\pagestyle{empty}

\begin{abstract}
Autonomous navigation in dynamic environments requires computing spatiotemporal trajectories that satisfy non-holonomic motion constraints. When the trajectories of the moving obstacles are predictable or known, a promising approach is to rely on the combination of state lattices constructed from precomputed feasible motion primitives and Safe Interval Path Planning -- a search-based algorithm with strong theoretical guarantees. While this approach yields feasible paths, the rich primitive sets needed for smooth navigation induce a large branching factor, which becomes costly when coupled with time-dependent obstacle intervals. To this end,
we present MeshSIPP, an efficient planner that removes the computational bottleneck by exploiting the fact that many primitives sweep the same regions and can therefore be validated together. MeshSIPP propagates primitives as spatial bundles, screens them with lightweight bounding-interval checks, and defers the expensive exact departure-time search until a primitive reaches its terminal state. A time-aware pruning rule additionally discards redundant space-time branches early in the search. We prove that the resulting search is complete and optimal. Extensive experiments over more than 6,000 benchmark instances and real-time ROS~2 simulations show that MeshSIPP achieves up to a 3$\times$ speedup over state-of-the-art spatiotemporal planners.
\end{abstract}


\section{Introduction}
Autonomous navigation in environments with dynamic obstacles is a fundamental challenge in mobile robotics. In many practical applications, such as warehouse logistics, the future movements of dynamic obstacles are either explicitly known or can be accurately predicted. Utilizing this predictive information enables a global spatiotemporal planning paradigm, allowing the robot to compute high-quality collision-free paths ahead of time. Beyond avoiding dynamic obstacles, practical deployment requires strict adherence to vehicle kinematics, as non-holonomic platforms (such as car-like or differential-drive vehicles) cannot turn in place and require smoothly curved, kinematically feasible trajectories.

A widespread approach to handle kinodynamic constraints in robotics is sampling-based path planning~\cite{rrt_star, rrt, sampling}. Planners of this family naturally operate in continuous spaces and rely on the randomized decomposition of the problem into smaller ones. 

They support complex vehicle constraints via customized steering functions and are particularly advantageous in high-dimensional configuration spaces, e.g., for robotic manipulators. Moreover, extensions like ST-RRT*~\cite{ST-RRT} incorporate a temporal dimension to avoid dynamic obstacles. Still, these methods provide only probabilistic guarantees of completeness and optimality and may struggle to find paths in confined configuration workspaces. 

Conversely, lattice-based planners rely on the discretization of the workspace/configuration space and provide strong theoretical guarantees with respect to this discretization. They are often preferable for systems with fewer degrees of freedom, such as mobile robots, where planning  primarily involves $(x,y,\theta)$. These methods reason over \emph{motion primitives} -- precomputed, kinodynamically feasible motions used to construct the final path (see Fig.~\ref{fig:planning_example}). Stacked motion primitives form a \textit{state lattice}, i.e., a graph where vertices correspond to the robot's states and edges to the feasible transitions between them. A shortest path on this graph can be obtained by the search algorithms like A*~\cite{astar}.

\begin{figure}[t]
  \centering
  \includegraphics[width=0.99\linewidth]{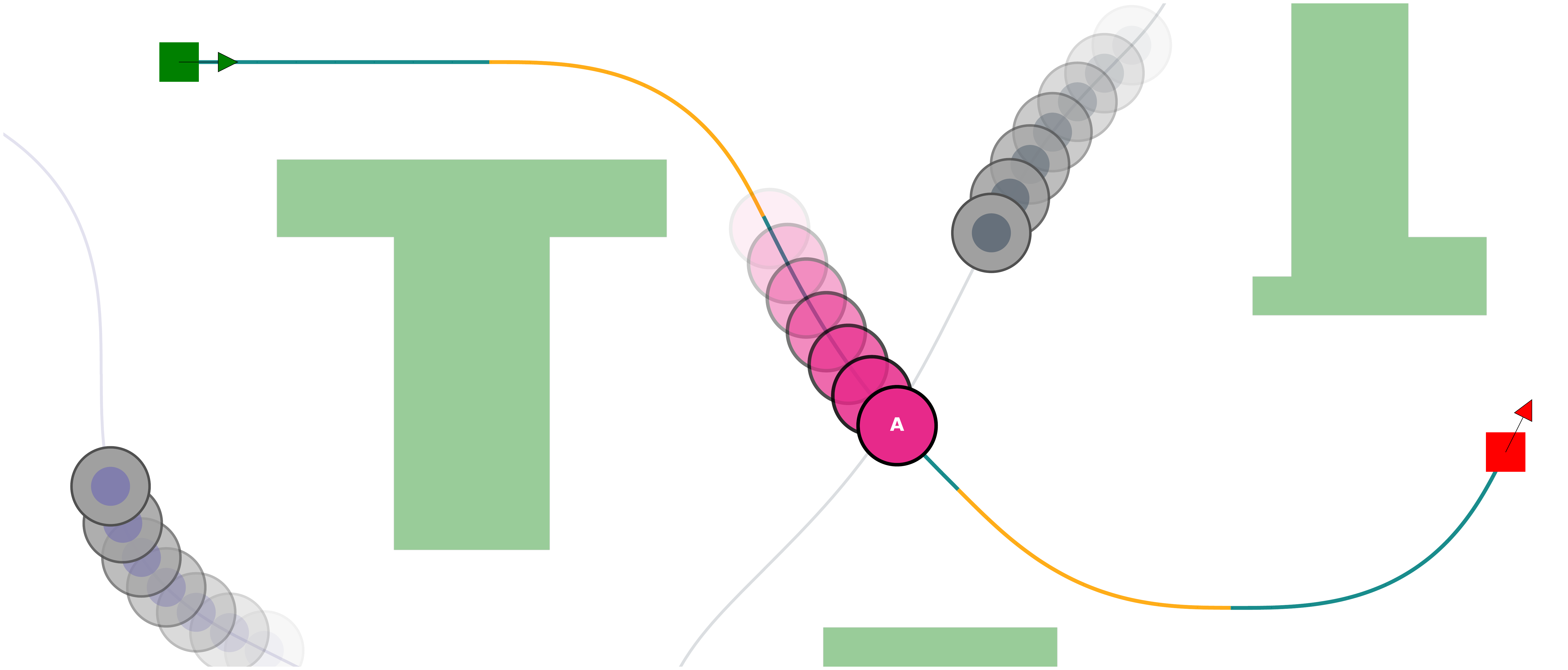}
  \captionsetup{font=small}
  \caption{Example of the path planning problem we are interested in. Within a 2D workspace (green: static obstacles, white: free space), the robot with state $(x, y, \theta)$ must compute a collision-free path composed of smoothly connected motion primitives (alternating colors) to reach the goal while avoiding dynamic obstacles.}
  \label{fig:planning_example}
  \vspace*{-12pt}
\end{figure}

To effectively extend this approach to dynamic environments, one can utilize Safe Interval Path Planning (SIPP)~\cite{sipp} that reduces the temporal search space by grouping obstacle-free time periods into contiguous safe intervals. This abstraction has been successfully extended to any-angle pathfinding~\cite{any_angle_sipp}, strict acceleration profiles~\cite{kinodynamic_sipp}, quadrotors~\cite{quadrotor_sipp}, high-dimensional manipulators~\cite{manipul_sipp}, anytime planning~\cite{anytime_sipp}, etc.

However, directly integrating state lattices with SIPP, as in~\cite{LatticeSIPP}, introduces a severe computational bottleneck. To produce smooth and expressive trajectories, rich motion-primitive libraries are required and this leads to a large branching factor. In dynamic environments, this is further amplified by the fragmentation of safe intervals at each state, resulting in a multiplicative increase in search complexity. As a consequence, straightforward combinations of lattice-based and safe interval planning become computationally burdensome in complex, obstacle-rich scenarios.

To this end, we introduce \textbf{MeshSIPP}, a novel kinodynamic spatiotemporal planning algorithm that transforms the conventional state lattice into a specialized \textit{mesh graph}~\cite{mesha_star}. Because many primitives sweep the same areas, MeshSIPP groups them into coherent bundles and propagates each bundle incrementally along their shared footprint, establishing spatial validity once for the whole group. Extending this idea to dynamic environments is not straightforward: primitives within a bundle have different execution durations, so evaluating exact temporal constraints during propagation would fragment the bundle and destroy the structure that makes it efficient. We resolve this with a lazy evaluation scheme that defers the expensive continuous departure search until a primitive reaches its terminal state, combined with a time-aware pruning rule that discards redundant space-time branches early in the search.

Our contributions are: (i) MeshSIPP, a bundle-based safe-interval search with lazy temporal evaluation; (ii) a bounded, time-aware terminal pruning rule, together with a proof that it preserves completeness and optimality with respect to the given workspace discretization and control set; (iii) an evaluation over 6,000+ instances against four baselines, in which MeshSIPP returns solutions of identical cost to SIPP at a 100\% success rate while running $2-3\times$ faster; and (iv) a real-time ROS/Gazebo validation on a TurtleBot3, replanning in 50 ms per cycle amidst 20 non-cooperative moving obstacles.

\section{Problem Statement}

Consider a mobile robot, approximated by a bounding circle of radius $R$, operating in a planar 2D workspace $\mathcal{W} \subset \mathbb{R}^2$. Its continuous state is defined by its coordinates and heading angle: $\mathbf{s} = (x, y, \theta) \in \mathcal{W} \times S^1$. The underlying motion is subject to non-holonomic kinematic constraints expressed through the dynamic model $\dot{\mathbf{s}} = f(\mathbf{s}, \mathbf{u})$, where $\mathbf{u}$ denotes the bounded control inputs. As is typical for mobile platforms, these dynamics are translationally invariant, being independent of the absolute spatial coordinates $(x, y)$.

\textbf{State Lattice and Motion Primitives}. To formulate continuous planning as a graph search, we define a discrete set of headings $\Theta = \{\theta_1, \dots, \theta_k\} \subset S^1$ and a finite \textit{control set} of \textit{template motion primitives}. Each template specifies an initial heading $\theta \in \Theta$, a control sequence $\mathbf{u}(t)$, and an execution duration $\Delta t > 0$. Due to the translational invariance of the dynamics, integrating $\dot{\mathbf{s}} = f(\mathbf{s}, \mathbf{u})$ from an initial state $(x, y, \theta)$ over $\Delta t$ produces a kinematically feasible trajectory segment whose continuous geometric shape is independent of the starting coordinates $(x, y)$. Consequently, rather than reasoning about time-varying control inputs, these templates can be treated directly as precomputed, invariant trajectory segments that the robot can execute from any spatial location.

Instantiating these templates at a common origin, such as $(0,0)$, yields a finite set of relative endpoint displacements that can be embedded into a regular spatial grid (see Fig.~\ref{fig:base_control_set}). This grid is not unique, as any of its refinements preserve the alignment of all primitive endpoints. We choose a resolution that reflects the robot's sensing and localization capabilities, using the resulting grid as the environmental \textit{occupancy map}.

\begin{figure}[t]
  \centering
  \includegraphics[width=0.99\linewidth]{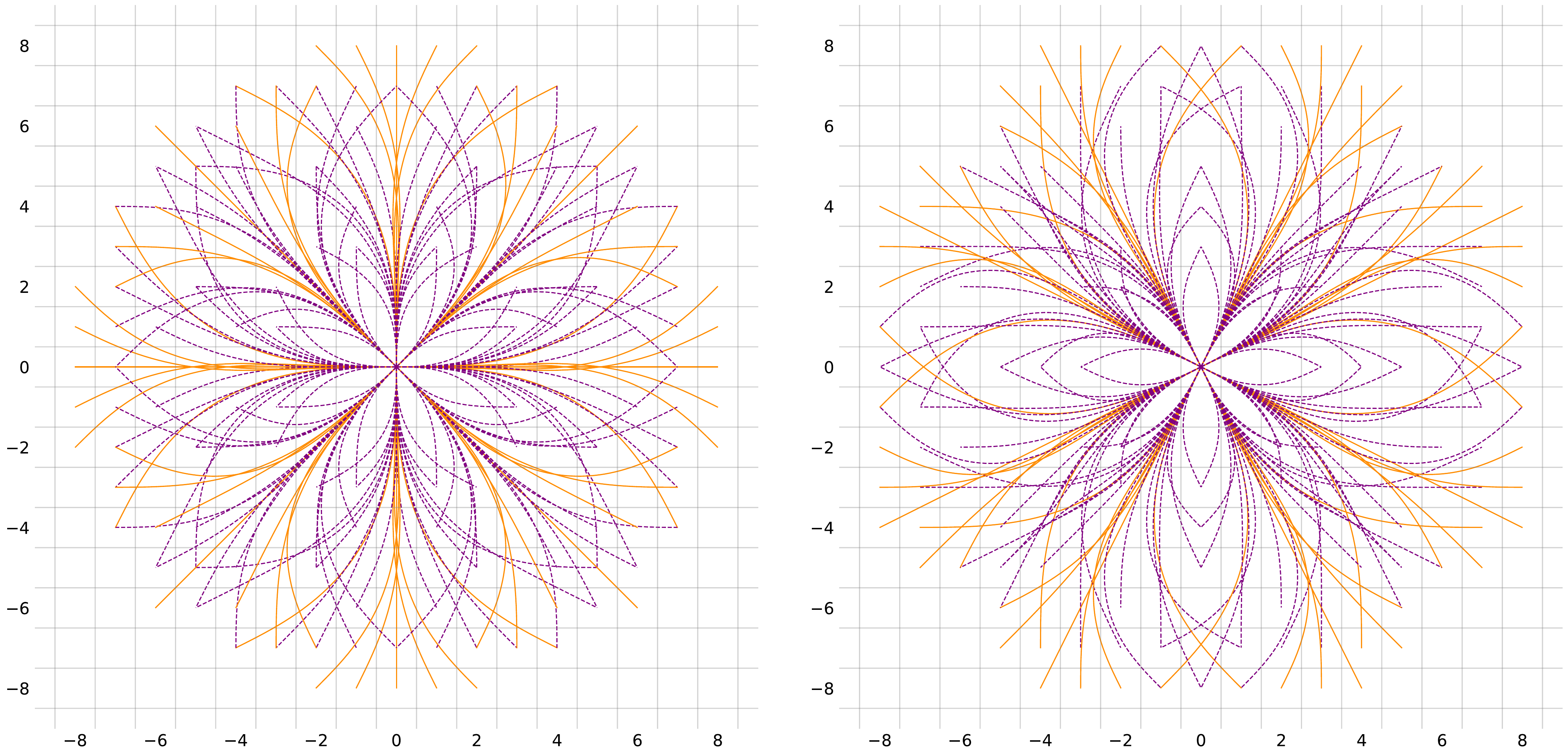}
  \captionsetup{font=small}
  \caption{Control set with regular grid: totaling $384$ primitives across
           $16$ discrete orientations. Two colors are used for visual clarity.}
  \label{fig:base_control_set}
\end{figure}

To close this construction under transitions, we additionally design each template so that its terminal heading also belongs to $\Theta$. We can therefore define the discrete state space as $\mathcal{S}=\{(i,j,\theta)\mid (i,j)\in\mathbb{Z}^2,\ \theta\in\Theta\}$, where $(i, j)$ corresponds to the cell centers of the chosen grid. A template with initial heading $\theta$ can be instantiated at any state $(i,j,\theta)$. Because its relative displacement is grid-aligned and its terminal heading belongs to $\Theta$, the resulting trajectory terminates at another state in $\mathcal{S}$. Thus, template instantiations define directed edges between discrete states and form a regular \textit{state lattice}. A path in this graph represents a sequence of smoothly connected primitives, yielding a continuous, kinematically feasible global trajectory whose expressiveness scales with the richness of the control set.

\textbf{Environment Representation}. We represent the environment on the same occupancy grid introduced above, and distinguish between static and dynamic obstacles:
\begin{itemize}
    \item \textit{Static Environment}: A binary occupancy map $M$, where $M_{i, j} = 1$ if cell $(i, j)$ is blocked, and $0$ otherwise.

    \item \textit{Dynamic Environment}: A spatio-temporal occupancy map $D$, where each cell $(i,j)$ is associated with a set $D_{i,j}=\{[t_{in}^{(1)},t_{out}^{(1)}), [t_{in}^{(2)},t_{out}^{(2)}), \dots\}$  of disjoint intervals, indicating when the cell is occupied by a moving obstacles. These intervals are precomputed by dilating dynamic obstacles by the robot radius $R$, sweeping them along their trajectories, and rasterizing the resulting spatio-temporal volumes onto the grid.
\end{itemize}

\textbf{Spatial-Temporal Trace}. As the robot executes a template primitive, its center sequentially traverses a set of grid cells. For each visited cell $c$, we record its coordinate offset $\Delta c=(\Delta i_c,\Delta j_c)$  from the primitive's origin and the relative time interval $[\tau_{in}^c,\tau_{out}^c)\subseteq[0,\Delta t)$ during which the center resides within $c$. When the primitive is instantiated at $(i,j,\theta)$ with absolute start time $t$, this local trace shifts accordingly. The execution is \textit{collision-free} if all cells in the trace are statically free in $M$ and its absolute intervals do not overlap with any obstacle intervals: $[t+\tau_{in}^c,t+\tau_{out}^c)\cap D_{i+\Delta i_c,\,j+\Delta j_c}=\emptyset$.

\textbf{Objective}. A valid path is a sequence of collision-free motion primitives, optionally interleaved with waits in safe intervals. Each action must start from the discrete state reached by the previous one, ensuring kinematic continuity. Starting from $s_0=(i_0,j_0,\theta_0)$ at time $t_0$, we seek a path to $s_f=(i_f,j_f,\theta_f)$ that minimizes the arrival time $t_f$.

\section{Background: Safe Interval Path Planning}

To effectively navigate dynamic environments where in-place waiting is permitted to avoid obstacles, several methods exist. Standard space-time search algorithms operate over an augmented state space $(x, y, \theta, t)$. By modeling waiting actions through discrete time increments ($t \to t + \Delta t$), these methods often suffer from severe state-space explosion, especially over long planning horizons or in continuous time. Safe Interval Path Planning (SIPP) \cite{sipp} mitigates this inefficiency by abstracting the continuum of free time at each spatial configuration into a finite set of maximal, contiguous, collision-free windows called \textit{safe intervals}. Over this interval-augmented space, a search state is defined as $p = \langle i, j, \theta, I \rangle$, and SIPP  functions similarly to A* (or Dijkstra's algorithm) to find a path to each state $p$ that minimizes the arrival time $g(p) = t_{arr} \in I$. This formulation enables a \textit{dominance property}: if multiple paths reach the same spatial configuration within the same safe interval $I$, only the earliest arrival time $t_{arr}$ needs to be maintained. Any later one is inherently dominated, as the robot could instead arrive earlier and safely wait in place within $I$ until any later moment. This allows SIPP to prune redundant trajectories and bound the search space by the number of safe intervals, regardless of the amount or duration of wait actions.

Consequently, a single spatial configuration gives rise to multiple search states, each corresponding to different time windows (before, between, or after the passage of dynamic obstacles). Maintaining all of them is essential for completeness and optimality, as arriving at an intermediate configuration in a later interval may be the only way to access a downstream safe interval that yields a faster overall path to the goal. Given this representation, the SIPP search proceeds by iteratively expanding states $p = \langle i, j, \theta, I \rangle$ to generate successors. To do so, all applicable motion primitives $m$ from the current configuration are evaluated. For each target configuration $(i', j', \theta')$ resulting from $m$, the algorithm retrieves the list of available safe intervals. For every target interval $I'$, we compute the earliest possible departure time $t_{\text{dep}}$ from the current state --- which directly minimizes the arrival time in $I'$ --- such that the robot successfully departs from the current interval, arrives within the target interval, and avoids all dynamic obstacles along the path (see Fig.~\ref{fig:departure-and-mesh}, left). Formally, a valid departure time $t_{\text{dep}}$ must satisfy:
\begin{enumerate}
\item $t_{dep} \ge t_{arr}$, enforcing temporal causality;
\item $t_{dep} + m.\tau_{out}^{(0,0)} \le I.t_{end}$, ensuring the agent vacates the origin cell before the current interval $I$ closes, where $m.\tau_{out}^{(0,0)}$ is the time required to leave the cell;
\item $t_{dep} + \Delta t_{m} \in I'$, ensuring arrival at the target cell within the destination interval $I'$, where $\Delta t_{m}$ is the execution duration of primitive $m$.
\end{enumerate}
This step generates a successor state $p' = \langle i', j', \theta', I' \rangle$ with an updated arrival time $g(p') = t_{dep} + \Delta t_m$

We encapsulate the calculation of the time $t_{dep}^*$ within a function $\textsc{GetEarliestDeparture}((i,j), m, t_{min}, t_{max})$. While its specific implementation is environment-dependent, for a grid-based state space, it can be achieved by projecting all temporal constraints back to the moment of departure. Specifically, the function iterates over all cells traversed by the primitive $m$, retrieves the occupied intervals of dynamic obstacles in those cells, and projects them backward in time according to the primitive's spatial-temporal trace. This projection yields a set of forbidden departure time blocks at the origin cell. By merging and sorting these invalid blocks, the algorithm identifies the earliest free gap that falls within the allowable departure window $[t_{min}, t_{max}]$.

\begin{figure}[t]
  \centering
  \includegraphics[width=0.55\linewidth]{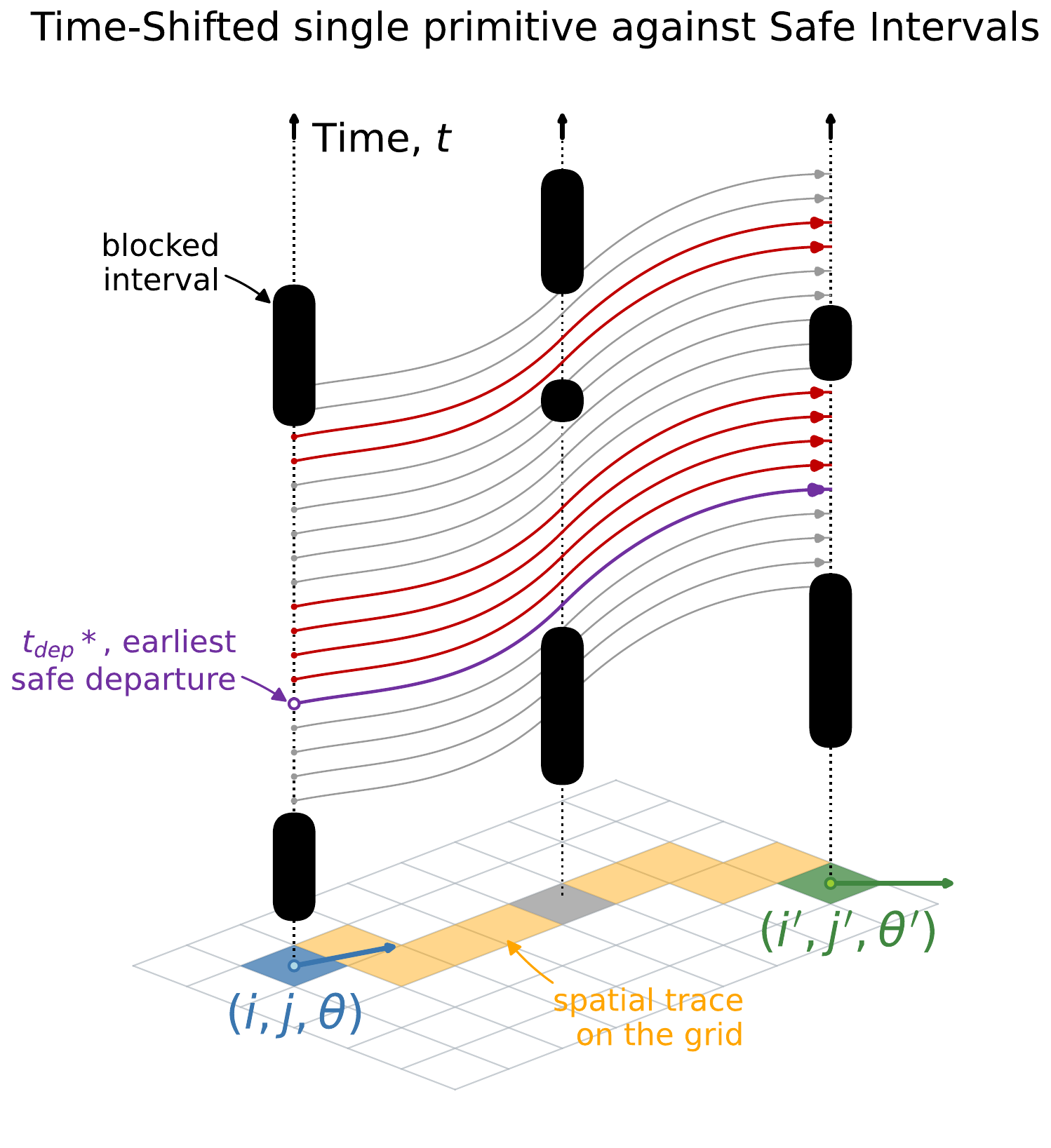}\hfill
  \includegraphics[width=0.35\linewidth]{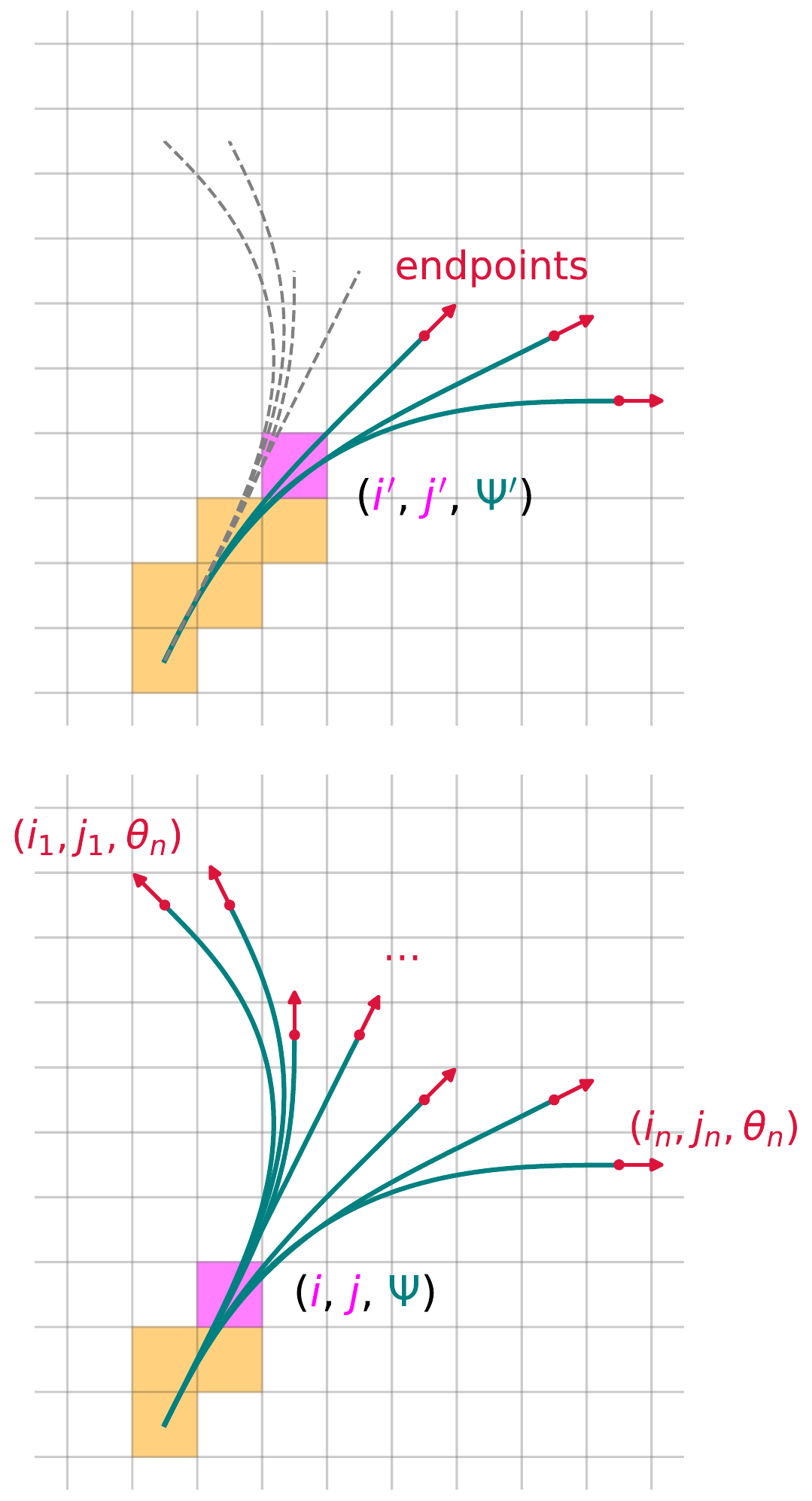}
  \captionsetup{font=small}
  \caption{\textbf{Left:} Finding the optimal departure time $t_{\text{dep}}^*$ for a single motion primitive. Colored curves represent the same primitive shifted across continuous times: gray trajectories collide with dynamic obstacle intervals (black blocks), red options are safe, and the purple trajectory denotes the earliest safe departure we are interested in. 
  \textbf{Right:} Extended cells in the mesh graph, representing a grid cell (magenta) combined with a bundle of motion primitives (teal) passing through it at a specific $0$-based spatial trace index $k$. The bottom panel depicts $(i, j, \Psi)$ at index $k=3$. The top panel shows its second-order successor $(i', j', \Psi')$ at $k=5$, retaining only the primitives whose spatial traces match the bundle's exact sequence of cells (orange ones) up to this point. The remainder are pruned (gray dashed lines) as their traces diverged at an earlier step. Red arrows indicate the endpoints of the surviving primitives in the bundle.}

  \label{fig:departure-and-mesh}
\end{figure}

\section{Method: MeshSIPP}

The proposed method combines the interval-based reasoning of SIPP with a structural transformation of the search space (state lattice) into a specialized \textit{mesh graph}. Conducting a search directly over this representation enables early pruning and mitigates redundant state expansions. 

\subsection{Mesh Graph}

To reason about motion primitives at the individual grid-cell level, we adopt the \textit{mesh graph} formulation introduced in~\cite{mesha_star}. In conventional state-lattice planning, search nodes represent discrete states $(i, j, \theta)$ and edges represent full motion primitives. In contrast, the mesh graph decomposes the primitive execution cell-by-cell along its spatial trace, allowing primitives with overlapping footprints to be evaluated simultaneously in bundles within the same search node.

Formally, nodes in the mesh graph are \textit{extended cells}, defined as tuples $u = (i, j, \Psi)$, where $(i, j)$ specifies the 2D grid cell and $\Psi$ represents a \textit{primitive configuration}:
\begin{equation*}
\Psi = \left\{ (m_1, k), (m_2, k), \dots, (m_n, k) \right\}.
\end{equation*}
Here, each pair $(m, k) \in \Psi$ indicates that an instance of the template primitive $m$ (from a precomputed control set) traverses cell $(i, j)$ as the $k$-th cell ($0$-based index) in its spatial trace. Thus, $\Psi$ captures the information about a specific bundle of primitives simultaneously passing through cell $(i, j)$ at identical spatial offsets $k$ (see Fig.~\ref{fig:departure-and-mesh}).

To establish equivalence with the conventional state lattice, extended cells are categorized into two types:
\begin{itemize}
    \item \textit{Initial Extended Cells:} Characterized by a configuration $\Psi_\theta = \left\{ (m_1, 0), \dots, (m_r, 0) \right\}$ containing all control-set primitives originating at orientation $\theta$ with step index $k=0$. An initial cell $(i, j, \Psi_\theta)$ serves as the direct functional counterpart to a standard lattice state $(i, j, \theta)$.
    \item \textit{Regular Extended Cells:} Others: intermediate cells ($k > 0$) along the spatial traces of active primitives from a bundle, as depicted in Fig.~\ref{fig:departure-and-mesh} (right).
\end{itemize}

Transitions in the mesh graph propagate active bundles cell-by-cell along the spatial traces of their constituent primitives. To generate a successor configuration for an immediate neighbor, the algorithm filters the current bundle $\Psi$: it retains only the primitives whose next swept cell matches that neighbor, while primitives routing elsewhere are excluded. For instance, in Fig.~\ref{fig:departure-and-mesh} (right, bottom panel), the bundle at trace index $k=3$ physically diverges. If the search follows the rightward branch and advances two steps, it reaches a second-order successor $(i', j', \Psi')$ at $k=5$ (top panel). The primitives that routed along the alternate branch are inherently pruned from $\Psi'$ (illustrated by gray dashed lines). This local filtering mechanism ensures that primitive bundles remain grouped as long as they share a spatial path, splitting only when their traces physically diverge.

When a primitive $m \in \Psi$ completes its spatial trace, it reaches a specific discrete endpoint state $(i_m, j_m, \theta_m)$ (indicated by red arrows in Fig.~\ref{fig:departure-and-mesh}, right). Reaching this terminal state initiates a new bundle, meaning the succeeding node is represented as an initial extended cell $(i_m, j_m, \Psi_{\theta_m})$. Edge costs in the mesh graph are assigned as follows:
\begin{itemize}
    \item Transitions into \textit{regular} extended cells carry a cost $0$.
    \item Transitions into \textit{initial} extended cells are assigned the full cost of the completed primitive, $\text{Cost}(m)$. Exactly one primitive completes upon entering any given initial cell, ensuring this edge cost is uniquely defined.
\end{itemize}
Formally, we encapsulate the entire neighbor-generation process in a function $\textsc{MeshSuccessors}(i, j, \Psi)$, which yields the set of all valid successor extended cells $(i', j', \Psi')$.

Crucially, paths between initial extended cells are in bijection with motion-primitive sequences on the state lattice, preserving both total cost and exact spatial trace.

\subsection{MeshSIPP: Lazy Temporal Evaluation}

Integrating continuous temporal constraints directly into the spatial mesh graph requires careful handling of primitive execution times. Because primitives within a single bundle possess distinct durations $\Delta t$, evaluating exact dynamic collisions cell-by-cell would prematurely fragment the bundles, diminishing the core efficiency of the mesh graph structure. To preserve the structural advantage of simultaneous primitive evaluation, we introduce a \textit{lazy temporal evaluation} scheme. This approach decouples spatial traversal from strict dynamic interval validation: spatial propagation along regular cells utilizes lightweight bounding checks, while the exact continuous departure search is deferred until primitives reach their endpoints at initial cells.

\textbf{Search Space Elements.} To support this delayed evaluation, we augment the spatial extended cells of the mesh graph with the temporal structure of SIPP.

\begin{definition}[MeshSIPP State]
A state in the MeshSIPP search space is defined as a tuple $u = \langle i, j, \Psi, s_{\text{root}} \rangle$, where $(i, j, \Psi)$ is the spatial extended cell, and $s_{\text{root}} = \langle i_0, j_0, \theta_0, I_0 \rangle$ is the base SIPP state from which the current bundle $\Psi$ originated. Thus, $(i_0, j_0, \Psi_{\theta_0})$ corresponds to the last initial extended cell traversed on the path up to $u$. The $g$-value (current time) of the state $u$ is defined entirely by the arrival time of its root state, i.e., $g(u) = g(s_{\text{root}}) = t_{\text{arr}}$.
\end{definition}

In this augmented space, the root state $s_{\text{root}}$ acts as a temporal anchor. As the bundle propagates through intermediate cells, time remains logically anchored at $t_{\text{arr}}$, and transition costs are inherently zero. Time advances and costs accumulate exclusively when a primitive terminates and a new temporal anchor is instantiated.

\textbf{Successor Generation and Conservative Pruning.} Algorithm~\ref{algorithm_meshsipp_successors} details the successor generation process. For a state $u$, the algorithm first retrieves the spatial successors from the underlying mesh graph via $\textsc{MeshSuccessors}$ (Line 3).

If a transition yields a \texttt{REGULAR} cell (Line 4), we apply \textit{conservative temporal pruning} (Lines 5--8). We estimate a bounding window $[t_{\text{min}}, t_{\text{max}})$ representing the earliest possible arrival and latest possible departure of \textit{any} primitive within the bundle $\Psi'$ traversing the current cell. If dynamic obstacles ($D_{i'j'}$) completely subsume this window, the transition is pruned. Otherwise, it forms a new regular state with zero transition cost (Lines 9--10).

Conversely, reaching an \texttt{INITIAL} successor (past Line 11) means a primitive $m$ has completed, triggering the deferred exact evaluation. For each destination safe interval $I'$, we compute a bounded departure window $[t_1,t_2]$ relative to $s_{\text{root}}$ (Lines 18--19) and execute the continuous departure search (Line 20). If a valid departure time $t_{\text{dep}}$ is found, we establish a new temporal anchor $s_{\text{new}}$ and assign the true transition cost as the required wait time at the root cell plus the primitive's execution duration: $(t_{\text{dep}} - t_{\text{arr}}) + m.\Delta t$.

\begin{algorithm}[ht]
\caption{Generating Successors of a MeshSIPP State}
\label{algorithm_meshsipp_successors}
\textbf{Input:} MeshSIPP state $u = \langle i, j, \Psi, s_{root} \rangle$ with time $t_{arr}$\\
\textbf{Output:} Set of successors and transition costs $(v, \mathit{cost})$
\begin{algorithmic}[1]
\State $\texttt{Successors} \gets \emptyset$

\ForAll{$\langle i', j', \Psi' \rangle \in \textsc{MeshSuccessors}(i, j, \Psi)$}

\If{$\Psi'$ is \texttt{REGULAR}}
    \State $t_{min} = t_{arr} + \min_{(m, k) \in \Psi'} (\tau_{in}^k)$
    \State $t_{max} = t_{arr} + \max_{(m, k) \in \Psi'} (\tau_{out}^k)$ 
    \If{$[t_{min}, t_{max}) \subset I_{b}$ for some $I_{b} \in D_{i'j'}$}
        \State \textbf{continue} \Comment{Fully covered by obstacles}
    \EndIf
    \State $v_{int} \gets \langle i', j', \Psi', s_{root} \rangle$
    \State $\texttt{Successors}.\texttt{add}\left( (v_{int}, 0) \right)$
    \State \textbf{continue}
\EndIf

\State $\theta' \gets$ heading from which $\Psi'$ originates \Comment{$\Psi' = \Psi_{\theta'}$}
\State $m \gets$ the primitive that just terminated to form $\Psi'$
\State $p \gets (s_{root}.i,~s_{root}.j)$  \Comment{Origin cell}
\State $\tau^* \gets m.\tau_{out}^{(0,0)}$ \Comment{Time to clear origin}
\State $\texttt{SafeIntervals} \gets \mathds{R}_{\ge 0} \setminus D_{i', j'}$

\ForAll{$I' = [t_{start}, t_{end}) \in \texttt{SafeIntervals}$}
    \State $t_{1} \gets \max(t_{arr},~t_{start} - m.\Delta t)$
    \State $t_{2} \gets \min(s_{root}.I.t_{end} - \tau^*,~t_{end} - m.\Delta t)$
    \State $t_{dep} \gets \textsc{GetEarliestDeparture}(p, m, t_{1}, t_{2})$
    \If{$t_{1} < t_{2}$ and $t_{dep} \ne \infty$}
        \State $s_{new} \gets \langle i', j', \theta', I' \rangle$
        \State $v_{term} \gets \langle i', j', \Psi', s_{new} \rangle$ 
        \State $t_{arr}^{new} \gets t_{dep} + m.\Delta t$  \Comment{$= g(s_{new})$}
        \State $\mathit{cost} \gets (t_{dep} - t_{arr}) + m.\Delta t$
        \State $\texttt{Successors}.\texttt{add}\left( (v_{term}, \mathit{cost}) \right)$
    \EndIf
\EndFor
\EndFor
\State \Return $\texttt{Successors}$
\end{algorithmic}
\end{algorithm}

\textbf{Heuristic.} As in other informed search algorithms, we use a heuristic to estimate the remaining cost to the goal and focus the search on promising states. Let $\widehat{h}_{\mathrm{SIPP}}$ denote an admissible SIPP heuristic; as is common, it may ignore the temporal component and estimate only the remaining motion cost. For an initial MeshSIPP state, we use the heuristic value of its corresponding SIPP state. For a regular state $u=\langle i,j,\Psi,s_{\mathrm{root}}\rangle$, each $(m,k)\in\Psi$ defines a discrete endpoint state $s_m$ obtained by completing $m$. Since the cost of a primitive is incurred only at its endpoint, we set
\[
h(u)=\min_{(m,k)\in\Psi}
\left\{\Delta t_m+\widehat{h}_{\mathrm{SIPP}}(s_m)\right\}.
\]
If $\widehat{h}_{\mathrm{SIPP}}$ also accounts for safe intervals, the minimum is additionally taken over all reachable safe intervals of the primitive endpoints.

\subsection{Early Terminal Pruning in MeshSIPP}

A profound advantage of transitioning to a cell-by-cell search space (mesh graph) is the structural capability to prune unpromising branches prematurely. In a static environment, a regular search node can be safely discarded if all its spatial endpoints are already expanded. Since any valid path from this node must pass through one of these endpoints, its further exploration is redundant. Extending this principle to SIPP, however, introduces a temporal challenge: a single spatial endpoint $(i, j, \theta)$ encompasses multiple safe intervals.

\begin{algorithm}[ht]
\caption{MeshSIPP Early Terminal Pruning}
\label{algorithm_meshsipp_prune}
\textbf{Input:} Regular $u = \langle i, j, \Psi, s_{root} \rangle$ with arrival time $t_{arr}$\\
\textbf{Output:} \texttt{true} if $u$ is pruned, \texttt{false} otherwise
\begin{algorithmic}[1]
\State $\Delta t_{max} \gets \max\limits_{m \in \texttt{ControlSet}} (m.\Delta t)$ \Comment{Slowest primitive}
\vspace{0.25\baselineskip}
\ForAll{endpoint $(i', j', \theta')$ for each primitive $m \in \Psi$}
\State $t_{min} \gets t_{arr} + m.\Delta t$  \Comment{$W_m  = [t_{min}, t_{max}]$}
\State $t_{max} \gets s_{root}.I.t_{end} - m.\tau_{out}^{(0,0)} + m.\Delta t$
\State $\mathcal{I} \gets \left(\mathds{R}_{\ge 0} \setminus D_{i',j'}\right) \cap [t_{min}, t_{max}]$ \Comment{Reachable}
\If{$\mathcal{I}$ is empty}
    \State \textbf{continue} \Comment{Arrival within $W_m$ is blocked}
\EndIf

\State $I_{last} \gets \text{last interval in } \mathcal{I}$ \Comment{Nearest to $t_{max}$}

\If{\textbf{not} \textsc{IsExplored}$\left(\langle i', j', \theta', I_{last} \rangle\right)$}
    \State \Return \texttt{false} \Comment{Found unexplored interval}
\EndIf

\State $t_{cutoff} \gets \min(t_{max},~t_{arr} + \Delta t_{max})$
\State $\mathcal{I}_{0} \gets$ intervals in $\mathcal{I}$ before $t_{cutoff}$

\ForAll{$I \in \mathcal{I}_{0}$ (checked backward)}
    \If{\textbf{not} \textsc{IsExplored}$\left(\langle i', j', \theta', I \rangle\right)$}
        \State \Return \texttt{false}
    \EndIf
\EndFor

\ForAll{$I \in \mathcal{I} \setminus  \mathcal{I}_{0} \setminus \{I_{last}\}$ (checked backward)}
    \If{\textbf{not} \textsc{IsExplored}$\left(\langle i', j', \theta', I \rangle\right)$}
        \State \Return \texttt{false}
    \EndIf
\EndFor
\EndFor

\State \Return \texttt{true} \Comment{All reachable intervals are explored}
\end{algorithmic}
\end{algorithm}

To maintain theoretical optimality without exhaustive global interval checks, we introduce a bounded, time-aware pruning mechanism. For a regular state $u = \langle i, j, \Psi, s_{\text{root}} \rangle$ with arrival time $t_{\text{arr}}$, the physically possible arrival times at any spatial endpoint via a primitive $m$ are strictly confined to a continuous window $W_m = [t_{\text{min}}, t_{\text{max}}]$. It reflects the constraint that motion cannot initiate before the agent's arrival -- $t_{\text{min}} = t_{\text{arr}} + m.\Delta t$ -- or later than the interval's closure, accounting for the time required to fully vacate the root grid cell: $t_{\text{max}} = s_{\text{root}}.I.t_{\text{end}} - m.\tau_{\text{out}}^{(0,0)} + m.\Delta t$.

To safely prune $u$, every safe interval $I$ intersecting $W_m$ must be \textit{explored} -- meaning its corresponding SIPP state has already been expanded by the search (which, under A* properties, guarantees its optimal cost is securely fixed), or its current arrival time matches the theoretical absolute minimum of the interval, $I.t_{\text{start}}$. If even one intersecting interval remains unexplored, the endpoint might yield a better path, and pruning is immediately aborted.

\begin{theorem}
MeshSIPP is complete and optimal, yielding identical solutions (in terms of path cost and spatial traces) to the standard SIPP algorithm over the same control set.
\end{theorem}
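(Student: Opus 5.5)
The proof reduces MeshSIPP to standard SIPP by a simulation argument. The reduction rests on the stated bijection between mesh-graph paths joining initial extended cells and primitive sequences on the state lattice. It then checks that the three ingredients added on top of that bijection — lazy temporal evaluation, conservative pruning of regular cells, and early terminal pruning — neither create spurious successors nor lose optimal ones.

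\textbf{Step 1: transitions between initial states coincide.} Fix a MeshSIPP initial state $\langle i,j,\Psi_\theta,s_{root}\rangle$ with $s_{root}=\langle i,j,\theta,I\rangle$ and arrival time $t_{arr}$. The claim is that the set of MeshSIPP initial successors reachable through chains of regular states, together with their costs, equals the SIPP successor set of $s_{root}$. In Algorithm~\ref{algorithm_meshsipp_successors} the window $[t_1,t_2]$ encodes exactly SIPP conditions (1)--(3), and \textsc{GetEarliestDeparture} is the same routine SIPP uses. Hence every initial successor produced by MeshSIPP is a valid SIPP successor with the same $g$-value, and the accumulated cost $(t_{dep}-t_{arr})+m.\Delta t$ along the zero-cost regular chain equals $g(s_{new})-g(s_{root})$.

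For the converse, take any valid SIPP transition via primitive $m$ departing at $t_{dep}\ge t_{arr}$. Every cell $c$ of $m$'s trace is free on $[t_{dep}+\tau^c_{in},t_{dep}+\tau^c_{out})$, and this interval meets the bounding window $[t_{min},t_{max})$ of every bundle containing $m$; the upper end of that window is effectively unbounded in time, since the departure may be delayed. A window that contains a free point cannot be subsumed by a single obstacle interval, so the conservative check never prunes the chain carrying $m$. The main subtlety here is the definition of $t_{max}$: as written it is anchored at $t_{arr}$, so I would argue that it must be read as $s_{root}.I.t_{end}$ plus the offset, or else show that pruning only happens when every departure is blocked. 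This is the step to check carefully against the intended semantics.

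\textbf{Step 2: the search ordering is preserved.} Because regular transitions cost $0$ and $h$ on a regular state is the minimum over its bundle of $\Delta t_m+\widehat h_{\mathrm{SIPP}}(s_m)$, $h$ is admissible. It is also consistent: along a regular chain the bundle only shrinks, so $h$ is nondecreasing, and at a terminal edge consistency follows from consistency of $\widehat h_{\mathrm{SIPP}}$. Therefore, restricted to initial states, MeshSIPP behaves as A* on the SIPP graph. Each initial state is expanded with its optimal $g$, and the dominance property of SIPP carries over because duplicate detection is keyed on $s_{root}$-type states.

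\textbf{Step 3: early terminal pruning is safe.} This is the main obstacle. Suppose a regular state $u$ is pruned by Algorithm~\ref{algorithm_meshsipp_prune}. Any completion of $u$ reaches some endpoint $(i',j',\theta')$ with arrival time in $W_m$, hence inside some safe interval $I\subseteq\mathcal I$. The algorithm has verified that every such interval is explored: either it is already expanded, so its optimal $g$ is fixed and by consistency is at most the arrival time via $u$, or its arrival time equals $I.t_{start}$, which cannot be improved. In either case the successor generated through $u$ would be dominated, so discarding $u$ loses no optimal path. I would need to confirm that the backward ordering and the cutoff at $t_{arr}+\Delta t_{max}$ only reorder the checks, so that the set checked is still all of $\mathcal I$. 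One point to handle is that the regular states strictly beyond $u$ also carry other bundle members that later split off. Since $u$'s bundle contains all of them, however, every endpoint reachable from descendants of $u$ was checked.

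Combining the three steps, MeshSIPP expands a set of initial states whose $g$-values match SIPP's and returns a goal with the SIPP-optimal cost. Completeness follows because every SIPP path is simulated by an unpruned mesh path, and finiteness follows from the finite number of safe intervals times the finite number of bundle configurations. Finally, the bijection of paths, including identical spatial traces, transfers the solution back to the lattice. Ties are resolved identically under a common tie-breaking rule.
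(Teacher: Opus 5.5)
Your proposal is correct and follows essentially the same route as the paper's sketch. Both arguments use a per-primitive correspondence between regular-state chains and SIPP transitions (same departure window, successor interval and cost), extend it by concatenation, and observe that conservative pruning removes only fully blocked windows while early terminal pruning removes only branches whose reachable intervals are already explored. Your explicit admissibility and consistency argument for the bundle heuristic fills in a step the paper leaves implicit.

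Your reinterpretation of $t_{max}$ in Algorithm~\ref{algorithm_meshsipp_successors} is necessary, not optional. Read literally, the zero-wait window $t_{arr}+\max\tau^k_{out}$ can prune a branch that becomes feasible only after waiting at the root. The paper's sketch tacitly relies on the same ``entire reachable temporal window'' reading.
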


\textit{Proof Sketch:} The proof establishes that augmenting the mesh graph with temporal anchors ($s_{\text{root}}$) and deferred continuous evaluation yields a dynamic search space isomorphic to the standard SIPP state lattice with respect to optimal paths. We first prove the correspondence for a single motion primitive. Specifically, MeshSIPP constructs a chain of states whose grid projections form exactly the primitive's spatial trace, ensuring identical interactions with static obstacles in both representations. Once the trace reaches the primitive's endpoint, MeshSIPP performs the exact continuous temporal evaluation, as in SIPP, yielding the same departure window, successor safe interval, and transition cost. This correspondence extends to arbitrary sequences of primitives by concatenating the corresponding MeshSIPP chains. Finally, the pruning rules do not break this correspondence: conservative pruning removes only branches whose entire reachable temporal window is occupied (Algorithm~\ref{algorithm_meshsipp_successors}, Line 7), while early terminal pruning removes only branches whose reachable intervals have already been explored.

\subsection{Heuristic Interval Ordering}

While verifying all intervals within $W_m$ guarantees optimality, a sequential scan introduces a computational bottleneck. To accelerate the discovery of unexplored intervals -- and thereby abort the pruning check faster -- we evaluate the intervals in a heuristically permuted order. 

Our strategy is designed to \textit{fail-fast} by exploiting the chronological expansion properties of A*. Since later time intervals are typically explored deeper in the search tree, scanning reachable intervals in reverse chronological order maximizes the probability of immediately encountering an unexplored state. Furthermore, a powerful bound can be established based on the expansion status of the latest reachable interval. If $I_{\text{last}}$ (nearest to $t_{\text{max}}$) \textit{has already} been explored, it implies that the search previously reached this spatial cell via some primitive $m$ from a predecessor state expanded at some earlier time $t' \le t_{\text{arr}}$. The same spatial trajectory with zero (or minimal) waiting would have reached earlier safe intervals at arrival times starting from $t' + \Delta t$. Since $t' \le t_{arr}$, we can strictly bound these early arrival times by defining a cutoff threshold: $t' + \Delta t \le t_{arr} + \max_{m \in \texttt{ControlSet}}(\Delta t) =: t_{cutoff}$. Consequently, all intervals from $t_{max}$ down to $t_{cutoff}$ are highly likely to have already been expanded via these zero-wait continuations, unless a dynamic obstacle has blocked those earlier arrivals.

This logic is implemented in Algorithm~\ref{algorithm_meshsipp_prune}. First, it evaluates $I_{\text{last}}$ -- if unexplored, the pruning check aborts immediately (Lines 8--10). Second, it jumps to $t_{\text{cutoff}}$ and scans backward toward $t_{\text{min}}$ (Lines 11--15). Finally, it verifies any remaining intervals between $t_{\text{cutoff}}$ and $I_{\text{last}}$, also in backward order (Lines 16--18). This permuted sequence dramatically minimizes average memory accesses. Because the entire window $W_m$ is comprehensively verified before a state is conclusively pruned (Line 19), the fail-fast heuristic strictly preserves the optimality guarantees of the algorithm.

\section{Experiments}

To evaluate the performance, scalability, and execution efficiency of MeshSIPP, we conducted both numerical benchmarks and ROS2/Gazebo simulations.

\subsection{Numerical experiments}
In the experiments we use the grid maps from the Moving AI benchmark \cite{sturtevant2012benchmarks}: \texttt{Denver\_1\_256} (city layout, $256 \times 256$), \texttt{arena} (open space with structured static obstacles, $49 \times 49$), and \texttt{empty\_64x64} (unobstructed grid, $64 \times 64$). Dynamic obstacles were simulated as independent agents moving along predefined trajectories. For each obstacle density, we generated sets of problem instances with randomly assigned valid start and goal configurations. In total, more than 6,000 evaluation runs were executed to ensure representativeness.

The motion primitive control set was constructed following the methodology of \cite{cubic} (Fig.~\ref{fig:base_control_set}). Robot orientations are discretized into 16 headings and for each one the control set provides 24 primitives with lengths ranging from 1 to 9 grid units, each terminating at a cell center and aligned with the valid discrete orientations. Motion along all primitives is executed at a constant speed of 0.1 cells per \textit{tick}, where a tick represents the base simulation time step.

\textbf{Baselines.} We compare MeshSIPP algorithm against four baselines that share the same underlying kinematic limits (maximum velocity and turning radius).

\begin{itemize}
\item \textbf{SIPP:} The standard Safe Interval Path Planning algorithm with motion primitives using the same control set from Fig.~\ref{fig:base_control_set}. Both SIPP and MeshSIPP were implemented in C++ to provide a meaningful comparison.
\item \textbf{ST-RRT*:} A spatiotemporal variant of the probabilistically optimal RRT* algorithm~\cite{ST-RRT}, connecting sampled states $(x, y, t)$ via constant-velocity linear segments. We utilized the standard implementation from the Open Motion Planning Library (OMPL v1.7.0 for C++) \cite{sucan2012the-open-motion-planning-library}. The time budget was set generously to $\max(1.0,~2 \times T_{SIPP})$ seconds, where $T_{SIPP}$ is the SIPP baseline runtime.
\item \textbf{ST-RRT*-Dubins:} An extension of the previous baseline where the spatial space is replaced with Dubins space. Consequently, the algorithm samples four-dimensional states $(x, y, \theta, t)$ and connects them via Dubins curves, natively respecting the agent's kinematic constraints. Due to the increased computational complexity of the steering function, the time budget was doubled relative to standard ST-RRT*.
\item \textbf{ORCA:} A two-stage approach widely used in robotics. A path is first planned ignoring dynamic obstacles. Then Optimal Reciprocal Collision Avoidance (ORCA) local planner \cite{vandenberg2011orca} tracks the path while reactively avoiding dynamic agents. We utilized the official implementation from the RVO2 library (also in C++) \cite{snape_rvo2}. Notably, while standard ORCA assumes cooperative collision avoidance where all agents share the responsibility equally, the dynamic obstacles in our setup are strictly non-cooperative. To account for this, we adjusted the ORCA velocity update factor, increasing it from the standard 0.5 to 1.0, thereby assigning the entire avoidance maneuver exclusively to the ego-agent.
\end{itemize}

\begin{figure*}[t]
\centering
\includegraphics[width=0.87\textwidth]{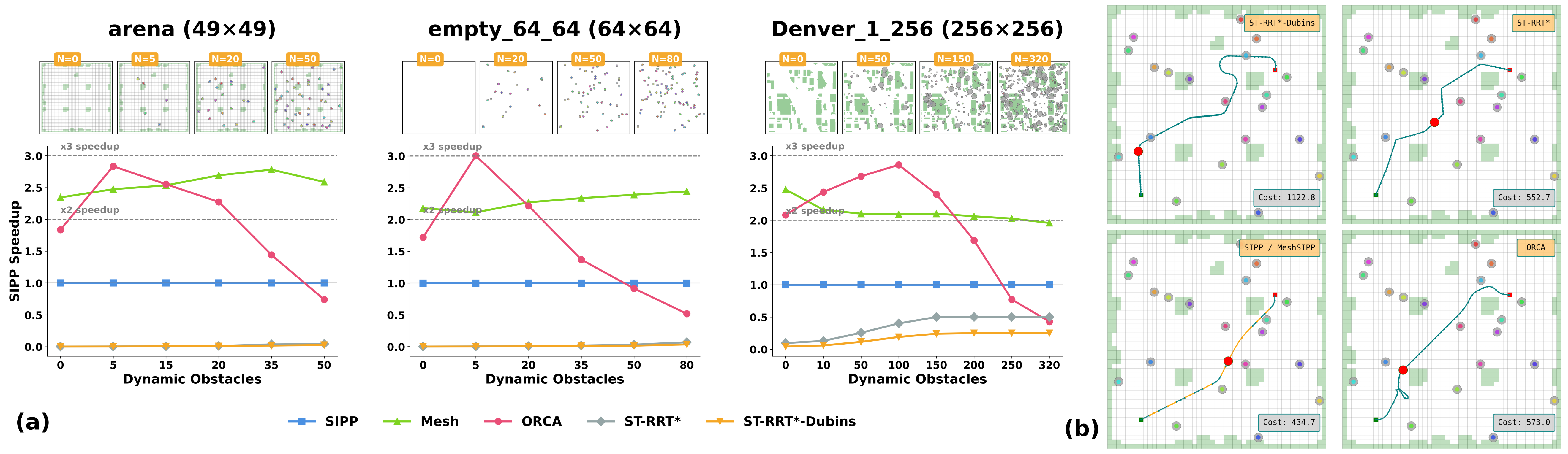}
\captionsetup{font=small}
\caption{Performance and trajectory evaluation of the tested algorithms. \textbf{(a)} Median runtime relative to standard SIPP across three map topologies with varying dynamic obstacle counts. \textbf{(b)} Snapshots of trajectory execution at $t = 250.0$ ticks on the \texttt{arena} map populated with 20 dynamic obstacles. The ego-agent (red circle, radius 1.0) navigates from start $(42, 7)$ to goal $(14, 37)$.}
\label{fig:algorithms_and_runtime_results}
\end{figure*}

\begin{table*}[t]
\centering
\captionsetup{font=small}
\caption{Median cost ratio and success rate (Cost / SR) relative to standard SIPP.}
\begin{tabular}{l | ccc | ccc | ccc}
\toprule
 & \multicolumn{3}{c}{\texttt{arena}} & \multicolumn{3}{c}{\texttt{empty\_64\_64}} & \multicolumn{3}{c}{\texttt{Denver\_1\_256}} \\
\textbf{Algorithm} & $N=0$ & $N=5$ & $N=35$ & $N=0$ & $N=20$ & $N=80$ & $N=0$ & $N=50$ & $N=200$ \\
\midrule
SIPP & 1.0 / 100\% & 1.0 / 100\% & 1.0 / 100\% & 1.0 / 100\% & 1.0 / 100\% & 1.0 / 100\% & 1.0 / 100\% & 1.0 / 100\% & 1.0 / 100\% \\
\textbf{MeshSIPP} & 1.0 / 100\% & 1.0 / 100\% & 1.0 / 100\% & 1.0 / 100\% & 1.0 / 100\% & 1.0 / 100\% & 1.0 / 100\% & 1.0 / 100\% & 1.0 / 100\% \\
ORCA & 1.0 / 100\% & 1.0 / 100\% & 1.3 / 90.2\% & 1.0 / 99.6\% & 1.1 / 100\% & 1.6 / 67.6\% & 1.0 / 98.5\% & 1.1 / 98.6\% & 1.3 / 70.1\% \\
ST-RRT* & 1.8 / 87.8\% & 1.6 / 87.4\% & 1.9 / 88.0\% & 1.7 / 91.8\% & 1.7 / 91.2\% & 2.0 / 90.3\% & 2.6 / 89.9\% & 2.2 / 89.7\% & 2.0 / 85.9\% \\
ST-RRT*-Dubins & 2.2 / 80.0\% & 2.3 / 79.3\% & 3.3 / 54.3\% & 1.8 / 89.3\% & 2.1 / 87.1\% & 3.9 / 68.8\% & 2.4 / 85.9\% & 2.4 / 84.2\% & 2.5 / 76.5\% \\
\bottomrule
\end{tabular}
\label{tab:cost_sr_results}
\end{table*}

\textbf{Results.} The experimental results are summarized in Fig.~\ref{fig:algorithms_and_runtime_results}(a) and Table~\ref{tab:cost_sr_results}. As anticipated, the MeshSIPP algorithm consistently achieves identical optimal path costs (1.0 ratio) and a 100\% success rate across all evaluated instances, matching the SIPP baseline. At the same time, it delivers a substantial reduction in computation time. The performance graphs demonstrate a stable speedup of approximately 2 to 3 times over the baseline SIPP. Crucially, this efficiency gain remains highly consistent across different map types and varying quantities of dynamic obstacles, demonstrating the scalability and reproducibility of MeshSIPP.

In contrast, the performance of ORCA depends heavily on obstacle density. Being a purely reactive local planner that bypasses complex graph searches, ORCA exhibits exceptional speed in sparse scenarios. In these simpler cases, ORCA frequently achieves a 100\% success rate and near-optimal trajectory costs. However, as the density of dynamic obstacles increases, its performance sharply degrades. In heavily populated environments, the high density of dynamic agents forces the reactive planner into continuous evasive maneuvers and oscillatory behavior, substantially increasing both the final path cost (execution time of the trajectory) and the total computation time required by ORCA to simulate and calculate the path itself. Because the obstacles are non-cooperative and strictly follow their predefined trajectories, they frequently corner the ego-robot into unavoidable collisions. This fundamental limitation is reflected in a drastically reduced success rate, dropping to 67.6\% on the \texttt{empty\_64\_64} map with 80 obstacles and 70.1\% on the \texttt{Denver\_1\_256} map with 200 obstacles.

For the sampling-based baselines, ST-RRT* and ST-RRT*-Dubins, execution time is bounded by a predefined budget, making direct runtime comparisons uninformative. Despite receiving generous computational budgets, both variants frequently fail to reach a 100\% success rate. Unlike ORCA, where a failure denotes a physical collision during execution, a failure in the sampling-based planners indicates an inability to discover a collision-free space-time trajectory before the time limit expires. When successful, the stochastic nature of RRT* inherently yields suboptimal paths, which is reflected in the elevated median cost ratios (up to $3.9\times$) in Table~\ref{tab:cost_sr_results}.

The runtime speedup profiles in Fig.~\ref{fig:algorithms_and_runtime_results}(a) provide additional context regarding the scalability of the sampling-based methods. Because the RRT* variants were allotted a minimum base budget of 1.0 second, their relative speedup appears near zero in simpler scenarios where the deterministic SIPP resolves the problem in milliseconds. However, in the most computationally demanding scenarios, such as the \texttt{Denver\_1\_256} map with extensive dynamic traffic, SIPP's runtime increases significantly. Consequently, the relative speedups of the ST-RRT* variants rise to meet their theoretical budget caps ($\frac{1}{2}$ and $\frac{1}{4}$ of SIPP's runtime). This convergence highlights that while sampling-based methods carry an excessive overhead for standard planar spatiotemporal navigation, their fixed-budget scaling becomes competitive as the combinatorial complexity of the exact search space approaches extremes, reinforcing their standard application in highly complex, high-dimensional configuration spaces~(such~as~multi-DOF~manipulators).

Sample trajectory executions and their total costs are illustrated in Fig.~\ref{fig:algorithms_and_runtime_results}(b). Indeed, SIPP and MeshSIPP produce smooth lowest-cost paths.

\subsection{Real-Time Validation in ROS 2 and Gazebo}

To validate MeshSIPP in a realistic physics engine, we deployed the planner within a Robot Operating System (ROS 2) \cite{Ros, Ros2} and Gazebo \cite{Gazebo} simulation. The physical environment and system architecture were strictly configured to test the planner's real-time adaptability.

\begin{itemize}
    
\item \textbf{Workspace:} A $10\times 10$ meter room, discretized into a $100\times 100$ occupancy grid. This yields a spatial resolution of $0.1$ meters per cell, utilized for both internal planning and RViz visualization.
\item  \textbf{Kinematics:} The planner utilizes the motion primitives from Fig.~\ref{fig:base_control_set}. Time resolution is fixed at 1 tick = 0.08 seconds. Consequently, the planned movement speed across grid cells is 0.1 meters per 0.08 seconds, equating to 1.25 m/s (approx. 4.5 km/h) -- a velocity representative of a fast pedestrian or a standard service robot.
\item  \textbf{Agents:} The ego-robot is modeled as a standard TurtleBot3 Waffle Pi (radius $0.22$ meters) \cite{TurtleBot3}. The environment is densely populated with 20 dynamic obstacles, modeled as moving cylinders with a radius of 10 cm and a height of 1.5 meters.

\item  \textbf{System Architecture:} A custom ROS 2 node manages dynamic obstacles, invokes MeshSIPP, publishes the generated trajectory to RViz, and tracks the resulting path via a PID wheel velocity controller. Standard localization is handled by the Nav2 \cite{Nav2} stack.
\end{itemize}

\begin{figure}[t]
\centering
\includegraphics[width=0.41\textwidth]{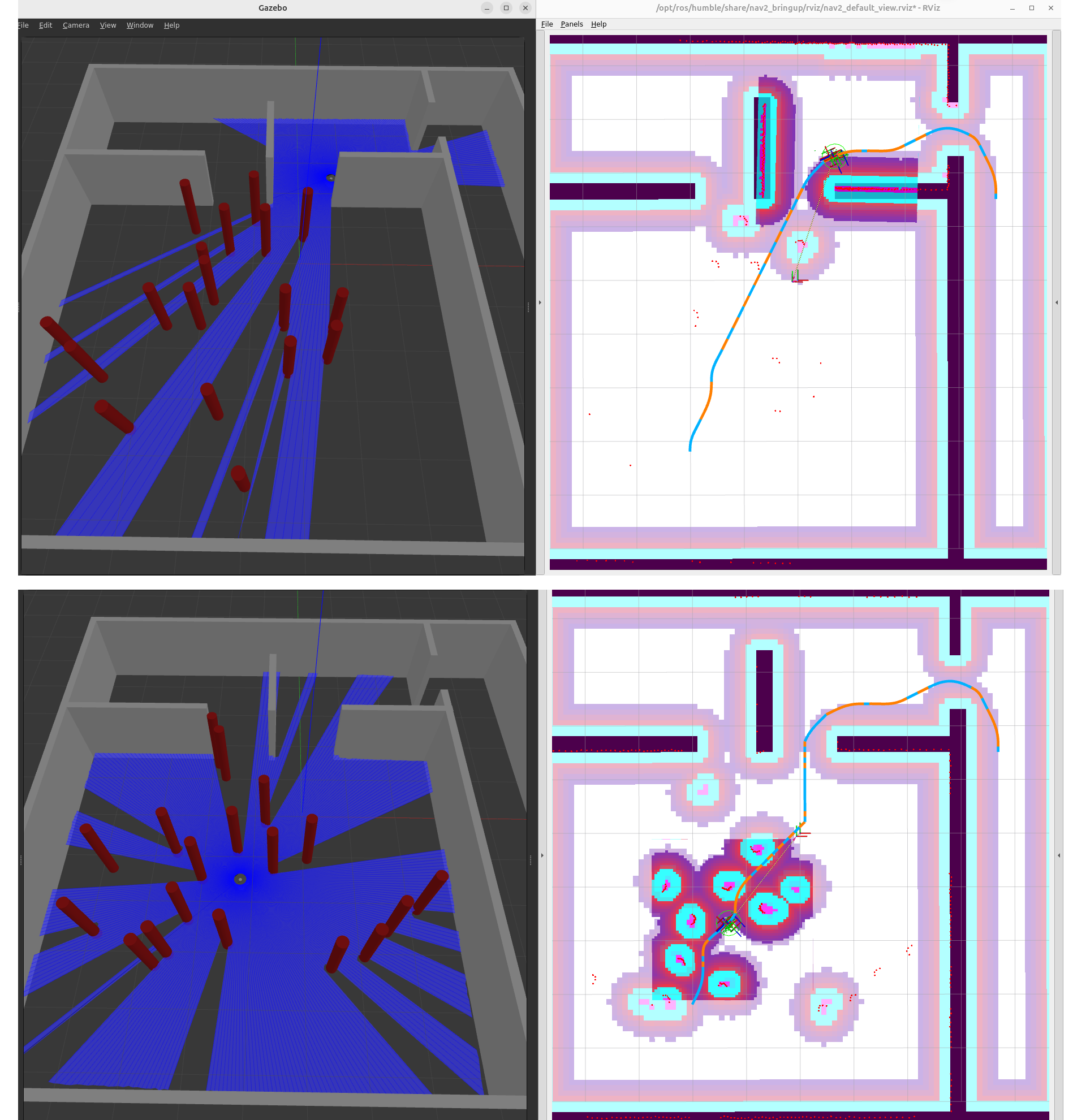}
\captionsetup{font=small}
\caption{Real-time navigation in the ROS 2 and Gazebo simulation. \textbf{Left:} The 3D physical environment in Gazebo, featuring the TurtleBot3 ego-agent, dynamic obstacles (red cylinders), and simulated 2D LiDAR scans (blue rays). \textbf{Right:} The corresponding RViz visualization showing the occupancy grid, inflated obstacle footprints, and the planned MeshSIPP trajectory (orange/blue primitives).}
\label{fig:gazebo_results}
\end{figure}

To bridge the gap between deterministic planning and real-world perception limits, we imposed a strict planning horizon. The robot is only permitted to predict dynamic obstacle trajectories for the next 200 ticks (16 seconds). Any time state beyond this 16-second threshold is assumed to be free of dynamic obstacles. To compensate for this limited foresight, the robot triggers a mandatory replan every 3 seconds. Throughout the simulation, MeshSIPP demonstrated high efficiency, requiring an average of only 50 ms of computation time for each replanning cycle.

By utilizing this receding-horizon strategy, the trajectory dynamically adapts and shifts over time as the ego-robot closes the distance to the moving obstacles. This allows the robot to seamlessly weave through dense, non-cooperative traffic and successfully reach the goal without any collisions.

This adaptive behavior is visualized in Fig.~\ref{fig:gazebo_results}. The top panels display the early stages of navigation, where distant obstacles fall beyond the temporal horizon and the robot plans a relatively direct route. The bottom panels illustrate the adapted trajectory later in the run, showcasing the planner heavily modifying the route to evade the approaching obstacle cluster. Full continuous execution of this scenario is available in the supplementary demonstration video.

\section{Conclusion}
In this paper, we presented MeshSIPP, an efficient spatiotemporal path planning algorithm that integrates state-lattice motion primitives with Safe Interval Path Planning (SIPP). By searching over a mesh graph with time-aware pruning, the method eliminates redundant branches while maintaining completeness and optimality. Evaluations demonstrate a consistent $3\times$ runtime speedup over standard lattice SIPP with identical trajectory quality.

Future work will extend MeshSIPP to multi-agent systems via decentralized or prioritized frameworks and validate the algorithm on physical robots in real-world scenarios.

\addtolength{\textheight}{-12cm}
\bibliographystyle{IEEEtran}
\bibliography{mybibfile}

\end{document}